\newcommand{\mythanks}{\thanks{equal contribution}}
\title{Probe-Free Low-Rank Activation Intervention}
\author{Chonghe Jiang \mythanks  \\
  CUHK\\
  \texttt{chjiang@link.cuhk.edu.hk} \\\And
  Bao Nguyen \mythanks\\
  CUHK \\
  \texttt{nbnguyen@se.cuhk.edu.hk} \\\AND
  Anthony Man-Cho So \\
  CUHK \\
  \texttt{manchoso@se.cuhk.edu.hk} \\\And
  Viet Anh Nguyen  \\
  CUHK \\
  \texttt{nguyen@se.cuhk.edu.hk} \\}
\newtheorem{theorem}{Theorem}[section]
\newcommand{\ba}{\begin{array}}
	\newcommand{\ea}{\end{array}}
\newcommand{\mc}{\mathcal}
\newcommand{\Let}{=}
\def\ba{\bm{a}}
\begin{document}
\maketitle
\begin{abstract}
Language models (LMs) can produce texts that appear accurate and coherent but contain untruthful or toxic content. Inference-time interventions that edit the hidden activations have shown promising results in steering the LMs towards desirable generations. Existing activation intervention methods often comprise an activation probe to detect undesirable generation, triggering the activation modification to steer subsequent generation. This paper proposes a probe-free intervention method \textbf{FLORAIN} for all attention heads in a specific activation layer. It eliminates the need to train classifiers for probing purposes. The intervention function is parametrized by a sample-wise nonlinear low-rank mapping, which is trained by minimizing the distance between the modified activations and their projection onto the manifold of desirable content. Under specific constructions of the manifold and projection distance, we show that the intervention strategy can be computed efficiently by solving a smooth optimization problem. The empirical results, benchmarked on multiple base models, demonstrate that FLORAIN consistently outperforms several baseline methods in enhancing model truthfulness and quality across generation and multiple-choice tasks. Our implementation can be found at \href{https://github.com/nguyenngocbaocmt02/EFI}{https://github.com/nguyenngocbaocmt02/EFI}.

\end{abstract}

\section{Introduction}
Transformer-based language models (LMs) have revolutionized generative modeling for natural language processing (NLP) \citep{ref:radford2019language,ref:brown2020language,ref:anthropic2024claude,ref:jiang2023mistral}. The LM training pipeline in various NLP areas typically involves three steps: (i) provide the model with prompts that include demonstrations of the task, (ii) learn from the examples, and (iii) make predictions without additional training. In the generating process, there is evidence \citep{ref:ji2023survey,ref:rawte2023troubling,ref:xu2024hallucination} showing that LM output can contain undesirable contents, such as inaccurate answers, toxic answers, or answers without linguistic meaning.

One of the predominant solutions to the above issues is \textit{ activation intervention}~\citep{ref:subramani2022extracting,ref:hernandez2023measuring,ref:li2024inference, ref:nguyen2025risk}. Compared with previous studies on model editing \citep{ref:zhang2023survey} and supervised fine-tuning \citep{ref:li2024pre}, activation intervention alters the model activations during inference time. Therefore, it does not need to alter model weights using a subset of
text samples, requiring fewer computational resources. However, new challenges must be addressed in the activation intervention task: (\textit{Q1}) How to improve the quality of activation intervention through appropriate modeling with theoretical underpinnings? (\textit{Q2}) How to enhance the computational efficiency of activation intervention by designing a low-cost intervention strategy? 

For question (\textit{Q1}), \citet{ref:burns2022discovering} reveals latent knowledge inside the internal activations of a language model and finds a direction in activation space that satisfies logical consistency properties. 
Following this observation, several works focus on finding a good intervention systematically by two-stage methods \citep{yin2024lofit,ref:li2024inference,pham2024householder}. In the first stage, they train a classifier (the probe) on the activations of a network to identify a subset of attention heads that are most important for learning the specific task. In the second step, they propose intervention policies based on the probe and geometric transformation, e.g., linear operations \citep{ref:li2024inference}, addictive offset bias vector \citep{yin2024lofit}, linear transformation using chance constrained programming~\citep{ref:nguyen2025risk}, and householder transformation \citep{pham2024householder}.

The preliminary success of these activation intervention methods
motivates us to improve the desirable generation of LMs. Specifically, we want to design an inference activation intervention method that does not require the `detect and rectify' procedure, addressing (\textit{Q1}) and (\textit{Q2}) simultaneously. Our core idea is to depict the region of the desirable answers and consider the related low-rank transformation mapping for intervention. Based on these, the parameters in the intervention mapping are the solutions to a nonlinear low-rank optimization problem, which minimizes the distance between the intervention vector and its projection on the region. Compared with the previous region modelings of the LM activation space \citep{mamou2020emergence, janiak2024characterizing}, we give the concrete formulation of the region with an analytical projection operator under the suitable distance measure.

 \noindent \textbf{Contributions.} We propose Probe-\textbf{F}ree 
 \textbf{Lo}w-\textbf{R}ank \textbf{A}ctivation \textbf{In}tervention (FLORAIN), a novel intervention method characterized by
\begin{itemize}[leftmargin=5mm]
    \item An ellipsoid model to capture the region of desirable output. In the training phase, we extract the activation heads of desirable answers to different questions and build the ellipsoid region models. The ellipsoid model is estimated using the first- and second-order statistical information of the activations extracted from the training data. To combat the ill-conditioning of the estimate due to the low sample size, we propose an extrapolation strategy to pull the region away from the region of \textit{un}desirable outputs.
    \item A low-rank intervention mapping. We propose to minimize the gap between the post-intervention activation and its projection onto the desirable ellipsoid region. The low-rank intervention mapping is the product of a one-layer perception with another low-rank matrix variable. We show that the objective function admits an analytical form under the Mahalanobis distance projection. In addition, we show that the training problem is smooth and, thereby, can be calculated efficiently using a lightweight gradient descent method or its preconditioned version. 
\end{itemize}

From the computational aspect, our FLORAIN intervention method edits the activation vectors on \textit{one} layer of the Transformer network. This contrasts to ITI~\cite{ref:li2024inference} that intervene different heads spread out in \textit{multiple} layers of the network. Intervening in one layer, as FLORAIN does, has two significant advantages: First, it provides conditions for parallelization to reduce inference intervention time. Second, since the intervention vectors are concentrated in one layer, the strategy reduces the representation shifts by avoiding misleading intervention in the subsequent layers. 

\section{Related Work}

\textbf{Controllable generation.}  Model editing~\citep{ref:wang2023knowledge, ref:zhang2024comprehensive} alters the model parameters to control the output, making it a powerful method for controllable generation. Another important category in controllable generation is fine-tuning, which includes Supervised Fine-Tuning (SFT,~\citep{ref:peng2023instruction, ref:gunel2020supervised}) and Reinforcement Learning from Human Feedback (RLHF,~\citep{ref:ouyang2022training, ref:griffith2013policy}). These methods typically require altering model weights, incurring substantial resources and costs for computation.
 
\noindent \textbf{Activation intervention} at inference time is an emerging technique for controllable generation~\citep{ref:li2024inference,ref:singh2024mimic,yin2024lofit}. Unlike model editing and fine-tuning techniques, the inference time intervention does not require altering the model parameters, leading to cheaper computational costs. \citet{ref:li2024inference} proposes a headwise intervention method for eliciting truthful generated answers of a language model. \citet{ref:singh2024mimic} considers the optimal transport plan between two empirical distributions to carry out the intervention. LoFit~\citep{yin2024lofit} identifies a specific subset of attention heads crucial for learning a particular task. It then fine-tunes the intervention vectors in those chosen heads. Another recent work \citep{pham2024householder} considers doing intervention activation using modified householder transformation based on the linear probe framework. 

\noindent \textbf{Region Modeling in LM.}
Various works aim to reveal how the semantic features influence the `region' of embedding vectors in the transformer-based LM. The work \citep{mamou2020emergence} utilizes mean-field theoretic manifold analysis to connect the geometry of feature representations with the linear separability of classes. \citet{janiak2024characterizing} identifies stable regions in the residual stream of Transformers, where the model output remains insensitive to small activation changes but exhibits high sensitivity at the region boundaries. However, most prior works focus on specific findings in region modeling and overlook the potential of enhancing activation intervention through transport between two regions.

\noindent \textbf{Low-Rank Optimization} is widely studied in machine learning, statistics, and signal processing \citep{olikier2022low,zhanxuan2020low,cory2021new}. The motivation for formulating problems into low-rank optimization in several applications lies in two folds: the nature of the low-rank property of the ground truth and the goal for achieving lightweight complexity in algorithm design \citep{mcrae2024benign}.  
The low-rank concept has also been applied to LM model-editing, fine-tuning, and model compression \citep{hu2021lora,hsu2022language}. 

\section{Nonlinear Low-Rank Intervention Mapping}
\label{sec:low_rank}
This section defines the nonlinear low-rank activation intervention maps. In our probe-free intervention framework, this mapping serves as the computationally efficient intervention function during inference time.

For clarity, we consider a specific layer $\ell$ of the transformer network, and the index of the layer $\ell$ will be omitted in the following statement. This layer contains $H$ attention heads; each head is of dimension $d$, so the output of this layer is a $D = d \times H$ dimensional activation vector. For an input $x_i$, the activation in the layer $\ell$ is denoted by $a_{i}=[a_{i1};a_{i2};\cdots;a_{iH}] \in \mathbb{R}^{D}$. Our inference-time intervention will modify the activation vector head-wise for $a_{ih}$ for all $h \in [H]$.  

The construction of the low-rank mapping involves two key considerations: (i) the intervention strategy should be generalized to different inputs, and (ii) the intervention strategy should not change a desirable answer too much. Motivated by these requirements, we construct our efficient nonlinear low-rank intervention as follows:
\begin{subequations} \label{eq:f-all}
\begin{equation} \label{eq:f-def}
        f: a \mapsto (I + L(a) R^{\top}) a + s,
    \end{equation}
    where $L(a)\in \mathbb{R}^{D \times k}$ is a low-rank matrix that depends on the input $a$, $R \in \mathbb{R}^{D \times k}$ is a constant low-rank matrix and $s\in \mathbb{R}^{D\times 1}$ is a vector. 
    The parametrization of $L(a)$ is described as follows:
    \begin{equation} \label{eq:L}
        L_{i}(a) = \phi (W_{i} \circ a + b_i) \quad \forall i \in [k],
    \end{equation}
\end{subequations}
    where $L_i(a)$ and $W_i$ refer to the $i$-th column of $L(a)$ and $W$, respectively. The notation $\circ$ denotes the Hadamard product of two vectors. The matrix $W \in \mathbb{R}^{D \times k}$ is the weight matrix, $b \in \mathbb{R}^{D\times k}$ is the bias term, and $\phi$ is the activation function applied component-wise to the input \footnote{Note that in this context, the \textbf{activation function} refers to the concept used in neural networks and is unrelated to the \textbf{activation} intervention task.}. One candidate for this activation function is $\phi(\cdot) = \tanh(\cdot)$. This choice enhances intervention performance, as the mapping $x \mapsto \tanh(x)$ can output both negative and positive real numbers. Additionally, the output of the mapping is bounded between $(-1, 1)$, resolving the scaling issues of the bilinear terms $L(a) R^\top$ in formulation~\eqref{eq:f-def} automatically. If the components of $L(a)$ are unbounded, one can multiply $W$ by any positive constant $\kappa$ and divide $R$ by $\kappa$ to obtain the same intervention. To address this issue, prior works on asymmetric low-rank optimization \citep{tu2016low,bhojanapalli2016dropping} typically incorporate a regularization term to close the norm gap between the scales of two low-rank matrices.
    In contrast to these methods, the optimization problem in equation~\eqref{eq:prob} does not require rescaling the two low-rank matrices $L(a)$ and $R$ due to the boundness property of the hyperbolic tangent function. These properties also help stabilize the training process.

    For a better understanding of the mapping constructed, we consider the degenerate case: if $W$ is a zero matrix and $\phi$ is a linear function, then $L(a) = b$ and $f(a) = (I + bR^\top) a + s$. This represents the classical bilinear low-rank mapping, which does not depend on the input vector $a$.

\section{Probe-Free Intervention Framework}
We propose a probe-free intervention framework based on the low-rank mapping described above. The framework performs the following tasks: (i) models the desirable answer region, and (ii) computes the intervention parameters by solving the associated smooth optimization problem. We begin by presenting the data matrix and the framework setup.

From the training data, we collect the activations of the desirable output and form a matrix $G_q \in \mathbb{R}^{d \times |\mathcal{G}(q)|}$ for each attention head, where $|\mathcal{G}(q)|$ denotes the cardinality of the set $\mathcal{G}(q)$. Each column of $G_q$ represents the activation of one desirable answer. We construct the undesirable matrix $B_q \in \mathbb{R}^{d \times |\mathcal{B}(q)|}$ using the same approach. Let $\mathcal{M}_q$ denote a generic manifold representing the region of desirable answer data points. The nonlinear low-rank mapping $f$ is trained by solving:

    \begin{equation} \label{eq:prob}
            \min_f~\sum_{q} \sum_{i \in \mathcal B(q) \cup \mathcal G(q)}~ 
            c_q ( f(a_i),  \mathrm{Proj}_{\mathcal M_q}(f(a_i)) ),
    \end{equation}
    where $c_q$ denotes the distance measure for question $q$, $f(a_i)$ denotes the terminal point of the intervention on activation $a_i$, and $\mathrm{Proj}_{\mathcal{M}_q}$ denotes the projection onto the manifold $\mathcal{M}_q$. The performance of the intervention depends on the choice of $c_q$ and the manifold $\mathcal{M}_q$. The manifold $\mathcal{M}_q$ should have a semantic interpretation in the modeling process and must be capable of separating the desirable answers from the undesirable answers. Additionally, solving problem~\eqref{eq:prob} should be computationally efficient.

We now describe one specific instance of our framework. The manifold $\mc M_q$ for question $q$ is chosen as an ellipsoid of the form
\begin{subequations} \label{eq:ellipsoid-all}
        \begin{equation} \label{eq:ellipsoid}
        \mathcal M_q = \{ x : (x-{\hat{\mu}}_q)^{\top} \hat{\Sigma}_q^{-1} (x-\hat{\mu}_q) \le \rho_q \},
        \end{equation} 
where $\hat \mu_q$ is the center of the ellipsoid, $\hat{\Sigma}_q$ is a positive definite matrix that prescribes the shape, or orientation, of the ellipsoid, and $\rho_q$ is the radius of the ellipsoid. 

Moreover, we choose $c_q$ as the squared Mahalanobis distance~\citep{de2000mahalanobis}
\begin{equation} \label{eq:ellipsoid-c}
    c_q(a, a') = (a - a')^{\top} \hat{\Sigma}_q^{-1} (a - a'),
\end{equation}
which is the distance of the candidate point from the center of mass divided by the width of the ellipsoid in the direction of the candidate point. This distance is rooted in the construction of the manifold \eqref{eq:ellipsoid}. We use the following projection:
\begin{equation} \label{eq:ellipsoid-proj}
\mathrm{Proj}_{\mc M_q}(y) = \arg \min_{x \in \mc M_q}~c_q(x, y).
\end{equation}
\end{subequations}

Theorem~\ref{thm:ellipsoid} gives the optimization problem under the Mahalanobis distance with analytical projection expression.
\begin{theorem}[Ellipsoidal manifold and Mahalanobis distance] \label{thm:ellipsoid}
Suppose that the manifold, the distance measure, and the projection operator are chosen as in~\eqref{eq:ellipsoid-all}.
Problem~\eqref{eq:prob} becomes
\begin{equation} \label{eq:prob-ellipsoid}
\min_f ~ \sum_{q} \sum_{i \in \mathcal B(q) \cup \mathcal G(q)}\!\!\left[ \left( \sqrt{c_q(f(a_i), \hat \mu_q)} - \sqrt{\rho_q}\right)_+ \right]^2,
\end{equation}
where $(y)_+ = \max\{0, y\}$. 
\end{theorem}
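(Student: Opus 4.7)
The plan is to compute the Mahalanobis squared distance from an arbitrary point $y = f(a_i)$ to its Mahalanobis projection onto the ellipsoid $\mathcal{M}_q$ in closed form, and show that this quantity equals $[(\sqrt{c_q(y,\hat\mu_q)} - \sqrt{\rho_q})_+]^2$. The theorem then follows by substituting $y = f(a_i)$ and summing over $q$ and $i$.

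First I would split into two cases depending on whether $y \in \mathcal{M}_q$. If $y \in \mathcal{M}_q$, i.e., $c_q(y, \hat\mu_q) \le \rho_q$, then $\mathrm{Proj}_{\mathcal{M}_q}(y) = y$ and the contribution is $0$; this matches the formula because $(\sqrt{c_q(y,\hat\mu_q)} - \sqrt{\rho_q})_+ = 0$. The substantive case is $y \notin \mathcal{M}_q$, where $c_q(y,\hat\mu_q) > \rho_q$ and the projection must lie on the boundary of the ellipsoid.

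In the boundary case, I would perform a whitening change of variables. Writing $z = x - \hat\mu_q$ and $w = y - \hat\mu_q$, and setting $u = \hat\Sigma_q^{-1/2} z$ and $v = \hat\Sigma_q^{-1/2} w$, problem~\eqref{eq:ellipsoid-proj} becomes
\begin{equation*}
\min_{u \in \mathbb{R}^D}~\|u - v\|_2^2 \quad \text{s.t.} \quad \|u\|_2^2 = \rho_q,
\end{equation*}
since the inequality constraint is active at the optimum when $\|v\|_2^2 = c_q(y,\hat\mu_q) > \rho_q$. This is the classical problem of projecting a point onto a Euclidean sphere: the unique minimizer is $u^\star = \sqrt{\rho_q}\, v / \|v\|_2$, aligned with $v$. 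A direct computation then gives $\|u^\star - v\|_2^2 = (\|v\|_2 - \sqrt{\rho_q})^2 = (\sqrt{c_q(y,\hat\mu_q)} - \sqrt{\rho_q})^2$. Since Mahalanobis distance is preserved under the whitening map ($c_q(x,y) = \|\hat\Sigma_q^{-1/2}(x-y)\|_2^2$), this is exactly $c_q(y, \mathrm{Proj}_{\mathcal{M}_q}(y))$.

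Combining the two cases yields $c_q(y, \mathrm{Proj}_{\mathcal{M}_q}(y)) = [(\sqrt{c_q(y,\hat\mu_q)} - \sqrt{\rho_q})_+]^2$ uniformly in $y$, and setting $y = f(a_i)$ and summing over $q$ and $i \in \mathcal{B}(q) \cup \mathcal{G}(q)$ produces~\eqref{eq:prob-ellipsoid}. There is no real obstacle here beyond carefully handling the degenerate sub-case $v = 0$ (i.e., $y = \hat\mu_q$), where any boundary point is a minimizer of $\|u\|_2^2 = \rho_q$ but the squared distance is still $\rho_q = (\sqrt{c_q(y,\hat\mu_q)} - \sqrt{\rho_q})^2$, so the formula remains valid. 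The only mild subtlety worth verifying is that the active-constraint reduction from a ball to a sphere is justified, which follows because the Mahalanobis squared distance $\|u-v\|_2^2$ is strictly decreasing along the segment from any strictly interior point toward $v$ when $v$ lies outside the ball.
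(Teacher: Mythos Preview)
Your proposal is correct and follows essentially the same approach as the paper: both reduce the Mahalanobis projection onto the ellipsoid to a Euclidean projection onto a ball via the whitening map $x \mapsto \hat\Sigma_q^{-1/2}(x-\hat\mu_q)$, solve that in closed form, and read off the squared distance as $(\sqrt{c_q(y,\hat\mu_q)}-\sqrt{\rho_q})^2$ when $y\notin\mathcal M_q$. Your write-up is in fact slightly more careful than the paper's in explicitly justifying why the constraint is active and in handling the degenerate sub-case $v=0$.
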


The objective function for the optimization problem is straightforward to compute, as it transforms the distance between $(I + L(a)R)a + s$ and its projection into the distance between $(I + L(a)R)a + s$ and a fixed vector $\hat{\mu}_q$. Readers can refer to the appendix \ref{sec:proof} for the proof of Theorem \ref{thm:ellipsoid}.

We now discuss how to derive the parameters to specify the manifold. The main challenge here is the high dimensionality of the activation vectors ($D = 4096$ for Llama3-8B), while for each question $q$, we observe fewer than ten samples in total \footnote{Data collected from the TruthfulQA dataset.}. Next, we present a practical method for estimating $\hat \mu_q$, $\hat \Sigma_q$, and the radius $\rho_q$. From the training data, we compute the question-specific mean vector for question $q$ and the mean vector computed by all questions
\begin{equation}
\begin{cases}
\hat{\mu}_q^+ = \frac{1}{| \mathcal G(q)|} \sum_{i = 1}^{|\mc G(q)|} (G_q)_i \\
 \hat{\mu}^+ = \frac{1}{N^+}\sum_{q}  \sum_{i = 1}^{|\mc G(q)|} (G_q)_i
\end{cases}
\label{eq:sample_mean_and_total_mean }
\end{equation}
where $N^+$ denotes the sample size of the desirable subset of data. Analogously, we compute $\hat \mu_q^-$ and $\hat \mu^-$ for the \textit{un}desirable subset.

Due to the noise incurred when computing $\hat{\mu}_q^+$ with small sample sizes, additional information must be incorporated to accurately construct the mean vector $\hat{\mu}_q$. We propose the following extrapolation scheme:
\begin{align}
&\hat{\mu}_{q} = \notag \\
&\hat{\mu}_{q}^{+} + \lambda \left(\alpha \underbrace{\frac{\hat{\mu}_{q}^{+}-\hat{\mu}_q^{-}}{\|\hat{\mu}_{q}^{+}-\hat{\mu}_q^{-} \|}}_{\text{question specific}} + (1-\alpha) \underbrace{\frac{\hat{\mu}^{+}-\hat{\mu}^{-}}{\|\hat{\mu}^{+}-\hat{\mu}^{-}\|}}_{\text{overall}} \right),
\label{eq:convex_combination}
\end{align}
where $\lambda > 0$ controls the magnitude of extrapolation. The extrapolation direction consists of a question-specific direction dictated by $\hat \mu_q^+ - \hat \mu_q^-$, and an overall direction dictated by $\hat \mu^+ - \hat \mu^-$. Both terms aim to guide $\hat{\mu}_{q}$ \textit{away from} the \textit{un}desirable values $\hat \mu_q^-$ and $\hat \mu^-$. The parameter $\alpha \in [0,1]$ controls the relative strength between the question-specific (local) and overall (global) directions. \citet{ref:li2024inference} proposed a translation of the activation along the truthful direction $\hat{\mu}^{+}-\hat{\mu}^{-}$, which coincides with the overall direction in our formula. The direction of extrapolation is illustrated in Figure~\ref {fig:extrapolation}, where the extrapolation vector is denoted by $\Vec{d}_{\textit{move}}$. 

\begin{figure}
    \centering
    \includegraphics[scale=0.3]{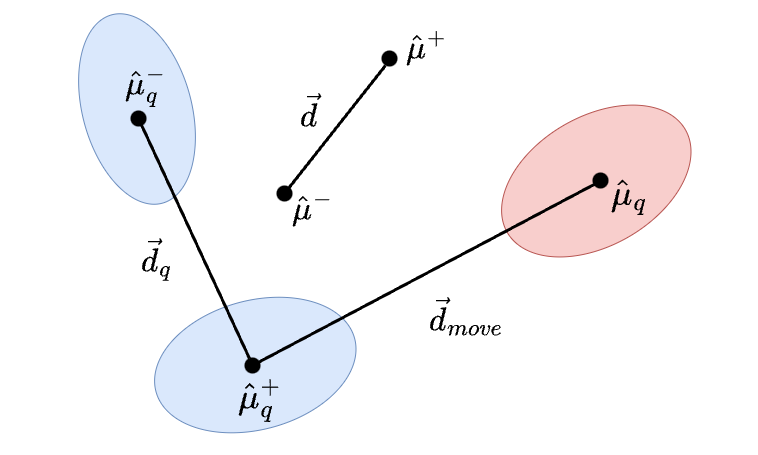}
    \caption{Exploration scheme in constructing question-wise mean vector $\hat{\mu}_{q}$, where the directions are computed as $ \Vec{d}=\hat{\mu}^{+}-\hat{\mu}^{-}$, $\Vec{d}_{q} = \hat{\mu}_{q}^{+}-\hat{\mu}_q^{-}$, and then take the scaled convex combination $\Vec{d}_{\textit{move}} = \hat{\mu}_{q}-\hat{\mu}_{q}^{+}  
    = \lambda \left(\alpha \frac{\Vec{d}}{\| \Vec{d}\|} + (1-\alpha) \frac{\Vec{d}_{q}}{\|\Vec{d}_{q}\|} \right)$.}
    \label{fig:extrapolation}
\end{figure}

For the covariance matrix, we assume that $\hat{\Sigma}_q^{-1}$ is constant across all $q$. This assumption is a fundamental component of linear discriminant analysis in machine learning \citep{tharwat2017linear}. We begin by computing the empirical covariance matrix:
    \begin{equation}
     S\!=\!\frac{1}{|N^{+}|-1} \sum_{q} \sum_{i=1}^{|\mc G(q)|} \! \big[ (G_q)_i- \hat \mu_q \big] \big[ (G_q)_i- \hat \mu_q \big]^\top.
     \label{eq:sample_cov}
    \end{equation}
    Due to small sample sizes, the empirical covariance matrix $S$ may be non-invertible. To address this issue, we adopt the linear shrinkage method~\citep{schafer2005shrinkage, ledoit2004well}, which shrinks the matrix towards a diagonal target, and set:
    \begin{equation*}
        \hat{\Sigma}_{q}^{-1} = (\beta S + (1-\beta) \mathrm{diag}(S))^{-1} \quad  \forall q,
    \end{equation*}
    where $\beta \in (0,1)$ is the shrinkage parameter. 
    
    To estimate radius $\rho$, the heuristic approach is to find the minimum $\rho$ such that all samples remain within the desirable answer region, centered at $\hat{\mu}_q^{+}$. For question $q$, we define:
    \begin{equation*}
\rho_q = \max\limits_{i \in |\mathcal{G}_{q}|}~[(G_q)_{i}-\hat{\mu}_{q}]^{\top}\hat{\Sigma}_{q}^{-1}[(G_q)_{i}-\hat{\mu}_{q}]. 
    \end{equation*} 
\section{Algorithms for Training Nonlinear Low-Rank Mapping}

The degenerated bilinear low-rank optimization problem has many applications in the statistical estimation field (e.g., matrix sensing, matrix recovery, PCA) \citep{chi2019nonconvex}. The literature shows that simple methods such as gradient descent can converge to the ground truth under mild assumptions with good initialization. In contrast, our objective function $ (W, R, b, s) \mapsto \mathcal{F}(W, R, b, s)$ cannot be represented by the factorization formulation and contains an extra nonlinear activation function, making it challenging to derive similar theoretical guarantees. However, the first-order algorithm is still a good choice with lightweight computational complexity. 

We emphasize that the activation function $\phi(\cdot) = \tanh(\cdot)$ introduces smoothness to the objective function, thereby stabilizing the optimization processes during gradient descent. Furthermore, we propose to use preconditioned gradient descent, a powerful approach that has gained popularity in (bilinear) low-rank optimization. This method effectively balances the scale at each iteration, accelerating convergence while maintaining the same computational complexity.
\begin{equation}
\begin{aligned}
& W_{t+1}\!=\!W_t\!-\!\eta \nabla_{W} \mathcal{F}\left(W_t, R_t,b_t,s_t\right)\left(R_t^{\top} R_t + \epsilon I\right)^{-1}, \\
& R_{t+1}\!=\!R_t \! - \! \eta \nabla_{R} \mathcal{F}\left(W_t, R_t,b_t,s_t\right)\left(W_t^{\top} W_t + \epsilon I\right)^{-1}, \\
& b_{t+1}\!=\!b_t-\eta \nabla_{b} \mathcal{F}\left(W_t, R_t,b_t,s_t\right)\left(b_t^{\top} b_t + \epsilon I\right)^{-1}, \\
& s_{t+1}\!=\!s_t-\eta \nabla_{s} \mathcal{F}\left(W_t, R_t,b_t,s_t\right)\left(s_t^{\top} s_t + \epsilon I\right)^{-1},
\end{aligned}    
\label{eq:alg-scaled-gd}
\end{equation}
where $\epsilon$ is a small number to ensure the scaling term is invertible. 
In comparison to vanilla gradient descent, the search directions of the variables in equation~\eqref{eq:alg-scaled-gd} are scaled. Intuitively, this scaling acts as a preconditioner, enhancing the search direction and enabling the use of larger step sizes. The preconditioner is adaptive and varies across iterations. Computationally, the scaled gradient descent introduces minimal overhead, as the inversion of the small-sized matrix is computationally inexpensive. Therefore, the per-iteration cost remains in the same order as standard gradient descent. In the context of bilinear low-rank matrix estimation, specifically in the problem $ \min_{b, R} f(bR^{\top}) $ \citep{Tong2021AcceleratingIL}, it has been shown that scaled gradient descent achieves a condition number-free convergence rate that outpaces standard gradient descent under mild assumptions.

\section{Empirical Results}

 This section presents the empirical results of our proposed algorithm FLORAIN. Section~\ref{sec:exp_truth_setup} clarifies the setting of our experiments on the TruthfulQA dataset, including details about datasets, tasks, metrics, baselines, and computational resources. Section~\ref{sec:exp_truth_results} showcases the superiority of our proposed methods to other baselines. We introduce the above contents in the Toxic Comments Classification Challenge dataset in Appendix \ref{sec:exp_toxic}. 
\subsection{Experimental Setup}
\label{sec:exp_truth_setup}
\textbf{Tasks and Metrics}: We evaluate our framework on the multiple choice and generation tasks, which are commonly used to verify the truthfulness of language models:
\begin{itemize}[leftmargin=5mm]
    \item In the generation task, the model produces a complete answer for each question using greedy autoregressive decoding. While the accuracy and informativeness of the answers are ideally assessed by humans, this evaluation method is both costly and time-consuming. As a result, most works in this area rely on a well-trained large language model as an alternative evaluation tool. In our approach, we employ two fine-tuned \texttt{GPT-3.5-instruct} models: one for classifying whether the answer is correct or incorrect, and another for determining whether the response is informative. According to \citet{ref:li2024inference}, the percentage of answers labeled as correct by the first model is referred to as the truthful score (True \%). Meanwhile, the percentage of answers labeled as informative by the second model corresponds to the informative score (Info \%). Following the methodology of \citet{ref:li2024inference}, we report both the truthful score (True \%) and the product of the truthful and informative scores, True*Info (\%).

    \item In the multiple-choice task, the model computes the log probability of completion for a given question and its corresponding set of choices. Following the approach of \citet{ref:lin2021truthfulqa}, we report two metrics: MC1 and MC2. MC1 identifies the correct answer as the one with the highest probability among the available choices, and the overall accuracy across all questions is denoted as MC1. MC2 represents the normalized total probability assigned to the set of true answers, given a question and its associated set of true/false reference answers.

    \item We also include two other metrics named Kullback-Leiber divergence (KL) of the model’s next-token prediction distribution post-versus-pre-intervention and Cross-Entropy Loss (CE). These two metrics assess the extent to which the generation distribution shifts after the intervention. Lower values are preferred, indicating that the intervention minimally alters the behavior of the original model, reducing the likelihood of producing abnormal characters or unnatural sentences. The calculation of these metrics is detailed in \citet{ref:li2024inference}.

\end{itemize}
\noindent \textbf{Datasets:} We use the TruthfulQA dataset to benchmark the effectiveness of FLORAIN. This dataset consists of 817 questions across 38 categories, designed to elicit false answers from language models, posing a challenge to generating accurate responses. Following the splits provided by \citet{ref:li2024inference} and \citet{yin2024lofit}, we divide the dataset into training (326 questions), validation (82 questions), and test (407 questions) sets. For evaluation, we perform 2-fold cross-validation, as done in \citet{ref:li2024inference} and \citet{yin2024lofit}. Additionally, we process the dataset into 5,918 question-answer pairs, each labeled with a binary indicator of desirability. The training set is used to develop our intervention policy, while the validation set is reserved for parameter tuning.

\noindent \textbf{Models:} We apply our method to multiple pre-trained models to enhance their truthfulness, including Llama-7B \citep{ref:touvron2023llama}, Llama2-chat-13B \citep{ref:touvron2023llama2}, and Llama3-8B \citep{dubey2024llama}. Our approach can be integrated into existing fine-tuning pipelines, helping to generate more accurate answers. To demonstrate its versatility, we also evaluate our method on models that have already been fine-tuned for the same task, treating these as both baseline models and competitors to our approach. Detailed descriptions of these models are provided in the baselines section.

\noindent \textbf{Hyperparameters}: We manipulate the behavior of FLORAIN by tuning two hyperparameters $\lambda$ and $\alpha$ in determining the $\hat{\mu}_q$ by equation \eqref{eq:convex_combination}. The details about selecting them for each pre-trained model are discussed in Appendix~\ref{sec:hyper}.

\noindent \textbf{Baselines:} We compare FLORAIN against competitive baselines with the same goal of eliciting truthful answers from language models:
\begin{itemize}[leftmargin=5mm]
    \item Inference-time Intervention (ITI, ~\citealt{ref:li2024inference}) is a state-of-the-art method that allows intervention without finetuning. We follow the hyperparameter settings provided in the original paper~\cite{ref:li2024inference} and their GitHub repository.\footnote{\url{https://github.com/likenneth/honest_llama/tree/master}}
    \item Few-shot prompting (FSP), introduced by \citet{bai2022training}, has demonstrated the effectiveness of prompting with 50 examples on the TruthfulQA benchmark. In our experiments, we use the 50-shot version, selecting 50 prompt pairs from the training set. 
    \item Instruction Fine-Tuning (IFT) \citep{wang2022self, chung2024scaling} is a well-known approach to improving the truthfulness of language models through fine-tuning. For comparison, we include two prominent models in this category: Alpaca-7B \citep{ref:taori2023alpaca} and Vicuna-7B \citep{ref:chiang2023vicuna}.
\end{itemize}

\noindent \textbf{Computation resources}. Our method and baselines run on four NVIDIA RTX A5000 GPUs, an i9 14900K CPU, and 128GB RAM.
\subsection{Numerical Results}
\label{sec:exp_truth_results}
\subsubsection{Comparison between Truthfulness Augmentation Baselines across Different LMs}

Table~\ref{table:benchmark-fine-tuning-free} presents the numerical results of our method, ITI, and Few-shot prompting (FSP) across three base models: Llama-7B, Llama3-8B, and Llama2-chat-13B on the TruthfulQA dataset. Notably, FSP is a prompt engineering technique orthogonal to intervention methods like FLORAIN and ITI. Therefore, we also report results for combinations of FSP with either ITI or FLORAIN. The FSP + FLORAIN combination consistently achieved the highest performance in metrics such as True * Info and True across all three models, with scores reaching 45\% for Llama-7B, 42\% for Llama3-8B, and 61\% for Llama2-chat-13B. 

When applied independently, FLORAIN outperforms ITI in four key metrics across both the generation and multiple-choice tasks, significantly enhancing the truthfulness and quality of the pre-intervention models. Notably, FLORAIN improves the True * Info(\%) score from 21.15 to 31.46 for Llama-7B, 32.88 to 36.78 for Llama3-8B, and 51.87 to 60.68 for Llama2-chat-13B. The KL values remain minimal across all models, indicating that the intervention effect is not overly aggressive.

\subsubsection{Comparison between ITI, FLORAIN, and Instruction Finetuning Methods.}

In this experiment, we benchmark two instruction fine-tuning methods, Alpaca and Vicuna, to assess whether our method further improves their quality and truthfulness. As shown in Table~\ref{table:benchmark-alpaca-vicuna}, FLORAIN demonstrates significant effectiveness, boosting the True * Info score by 8.07\% for Alpaca and 14.21\% for Vicuna. In both models, FLORAIN outperforms ITI across all quality metrics, highlighting its ability to enhance both accuracy and truthfulness.

\section{Conclusion and Future Directions}
In this paper, we introduced FLORAIN, a novel probe-free low-rank intervention framework for activation intervention. The framework aims to minimize the distance between the post-intervention vector and its projection onto the desirable answer region, an ellipsoid that is carefully estimated using first-order and second-order statistical information. Unlike existing intervention methods, our probe-free approach does not require classifiers to identify the responsible heads or layers for intervention. Instead, it formulates an optimization problem that incorporates region modeling and analytical projection. Additionally, the intervention is performed in a single layer of the output transformer network, allowing for efficient parallel computation and mitigating distribution shift phenomena. A potential direction for future research is to extend the region modeling approach, such as by considering the subspace spanned by the desirable answer matrix as the desirable answer region.

\section{Limitations}
Our method relies on the scale of the training dataset. In cases where the number of training samples is small, constructing the ellipsoid region becomes more challenging, as the estimated mean and covariance matrix are more susceptible to higher bias. 
The smooth optimization problem derived in the main text exhibits nonconvexity. Due to our limited understanding of its optimization landscape, the algorithm may converge to local minimizers. As a result, our first-order algorithm does not guarantee convergence to the global optimum. To solve this problem, we employ standard gradient descent without implementing preconditioning.

\textbf{Acknowledgments.} Viet Anh Nguyen gratefully acknowledges the generous support from the UGC Early Career Scheme Grant 24210924 and the CUHK’s Improvement on Competitiveness in Hiring New Faculties Funding Scheme.

\begin{table*}[htbp]
\caption{Quantitative results of different methods on TruthfulQA dataset, across different Language Models.}
\label{table:benchmark-fine-tuning-free}
\begin{subtable}[htbp]{\textwidth}
\begin{tabular}{lcccccc}
\toprule
Methods              & True * Info (\%) $\uparrow$ & True (\%) $\uparrow$ & MC1 $\uparrow$ & MC2 $\uparrow$ & \multicolumn{1}{l}{CE $\downarrow$} & KL $\downarrow$ \\ \midrule
Unintervened   & 21.15          & 22.16          & 25.58          & 40.54          & 2.13 & 0.00 \\
ITI            & 26.52          & 28.03          & 27.78          & 43.59          & 2.20 & 0.07 \\
FLORAIN (ours) & \textbf{31.46} & \textbf{34.72} & \textbf{31.76}          & \textbf{47.43}          & 2.21 & 0.09 \\ \midrule
FSP            & 36.13          & 39.78          & 34.03 & 50.34 & 2.13 & 0.00 \\
FSP + ITI      & 40.63          & 45.16          & 35.50          & 52.48          & 2.20 & 0.07 \\
FSP + FLORAIN (ours) & \textbf{45.31}              & \textbf{49.23}       & \textbf{36.45} & \textbf{54.27} & 2.20                                & 0.08            \\ \bottomrule
\end{tabular}
\caption{Llama-7B}
\end{subtable}
\begin{subtable}[htbp]{\textwidth}
\begin{tabular}{lcccccc}
\toprule
Methods              & True * Info (\%) $\uparrow$ & True (\%) $\uparrow$ & MC1 $\uparrow$ & MC2 $\uparrow$ & \multicolumn{1}{l}{CE $\downarrow$} & KL $\downarrow$ \\ \midrule
Unintervened   & 32.88          & 44.18          & 30.36          & 48.98          & 2.38 & 0.00 \\
ITI            & 35.92          & 46.88          & 32.07          & 49.84          & 2.50 & 0.13 \\
FLORAIN (ours) & \textbf{36.78} & \textbf{48.67} & \textbf{34.56}          & \textbf{53.68} & 2.51 & 0.14 \\ \midrule
FSP            & 36.32          & 39.78          & 35.74 & 52.93          & 2.38 & 0.00 \\
FSP + ITI      & 40.63          & 45.16          & 35.50          & 52.98          & 2.48 & 0.14 \\
FSP + FLORAIN (ours) & \textbf{42.15}              & \textbf{47.32}       & \textbf{36.98} & \textbf{55.83} & 2.51                                & 0.16            \\ \bottomrule
\end{tabular}
\caption{Llama3-8B}
\end{subtable}
\begin{subtable}[htbp]{\textwidth}
\begin{tabular}{lcccccr}
\toprule
Methods & True * Info (\%) $\uparrow$ & True (\%) $\uparrow$ & MC1 $\uparrow$ & MC2 $\uparrow$ & \multicolumn{1}{l}{CE $\downarrow$} & \multicolumn{1}{c}{KL $\downarrow$} \\ \midrule
Unintervened         & 51.87          & 59.86          & 35.38          & 53.32          & 2.31 & 0.00 \\
ITI                  & 57.02          & 63.04          & 37.46          & 55.59          & 2.32 & 0.17 \\
FLORAIN (ours)       & \textbf{60.68} & \textbf{67.70} & \textbf{39.65}          & \textbf{59.57} & 2.35 & 0.18 \\ \midrule
FSP                  & 55.97          & 58.63          & 40.76 & 57.84 & 2.31 & 0.00 \\
FSP + ITI            & 56.78          & 59.24          & 41.50 & 59.01          & 2.33 & 0.13 \\
FSP + FLORAIN (ours) & \textbf{61.14} & \textbf{62.45} & \textbf{44.52} & \textbf{61.48} & 2.37 & 0.16 \\ \bottomrule
\end{tabular}
\caption{Llama2-chat-13B}
\end{subtable}
\end{table*}
\begin{table*}[htbp]
\caption{Quantitative results of intervention methods on instruction-finetuned models Alpaca and Vicuna.}
\label{table:benchmark-alpaca-vicuna}
\begin{tabular}{lcccccc}
\toprule
Methods & True*Info (\%) $\uparrow$ & True (\%) $\uparrow$ & MC1 $\uparrow$ & MC2 $\uparrow$ & \multicolumn{1}{c}{CE $\downarrow$} & \multicolumn{1}{c}{KL $\downarrow$} \\ \midrule
Alpaca                  & 30.39          & 30.85          & 26.56          & 41.63          & 2.81 & 0.00 \\
Alpaca + ITI            & 37.67          & 38.19          & 28.89          & 45.19          & 2.88 & 0.14 \\
Alpaca + FLORAIN (ours) & \textbf{38.56} & \textbf{40.52} & \textbf{31.32} & \textbf{46.83} & 2.85 & 0.15 \\ \midrule
Vicuna                  & 38.24          & 42.10          & 31.83          & 48.48          & 2.67 & 0.00 \\
Vicuna + ITI            & 49.27          & 53.25          & 33.42          & 51.80          & 2.77 & 0.26 \\
Vicuna + FLORAIN (ours) & \textbf{52.45} & \textbf{57.81} & \textbf{36.86} & \textbf{56.76} & 2.78 & 0.27 \\
\bottomrule
\end{tabular}
\end{table*}

\clearpage
\bibliography{intervention}

\clearpage
\appendix
\section{Hyper-parameter Tuning}
\label{sec:hyper}
In our framework, two key hyperparameters are $\lambda$ and $\alpha$. We determine their values by evaluating performance metrics and text generation quality in the validation set. Through a grid search over $\{2, 3, 4, 5\}$ for $\lambda$ and $\{0.0, 0.2, 0.4, 0.6, 0.8, 1.0\}$ for $\alpha$, we find that setting $\alpha = 0.2$ and $\lambda = 5$ yields strong performance across various cases. As a result, unless otherwise specified, we adopt this combination for all our experiments.

Additionally, the choice of the intervened layer is crucial. Given the limited number of layers, we can efficiently conduct a search procedure to optimize validation test performance. Specifically, we intervene at layer 11 for Llama-7B, Alpaca-7B, Vicuna-7B, layer 12 for Llama3-8B, and layer 14 for Llama2-chat-13B.

\section{Proof of Theorem~\ref{thm:ellipsoid}}
\label{sec:proof}
The intervention maps the activation of undesirable answer $a_i$, $i \in \mathcal B(q)$ onto the ellipsoid region of the desirable answers, which can be defined below.
    \begin{equation}
        \mathcal E_q = \{ x : (x-\hat{\mu}_q)^{\top} \hat{\Sigma}_q^{-1} (x-\hat{\mu}_q) \le \rho_q \}
    \end{equation} 
In the proof part, we simplify the notation of $\hat{\Sigma}_q$ (resp. $\mathcal E_q, \hat \mu_q$) as $\Sigma$ (resp. $\mathcal{E}, \mu$) for clarity. Let the Mahalanobis projection be
\[
\mathrm{Proj}_{\mathcal E}(y) = \arg\min_{x \in \mathcal E}~ \left(y-x\right)^{\top} \Sigma^{-1} \left(y-x\right),
\]
where 
\[
   \mathcal E = \{ x : (x-\mu)^{\top} {\Sigma}^{-1} (x-{\mu}) \le \rho \}.
\]

\begin{proof}[Proof of Theorem \ref{thm:ellipsoid}]
If $y \in \mathcal{E}$, the result can be immediately derived by the definition of the projection. If $y \notin \mathcal{E}$, we can transform the coordinates of $x,y$ to simplify the projection problem. We define
\[
   \hat{y} \Let \Sigma^{-\frac{1}{2}} y, \quad 
   \hat{x} \Let \Sigma^{-\frac{1}{2}} x.
\]
The projection problem has been transformed into
\[
\arg\min_{\hat{x} \in \mathcal{\hat{E}} } \|\hat{y}-\hat{x} \|_2^{2},
\]
where
$\mathcal{\hat{E}} = \{ \hat{x} : \|\hat{x}-\Sigma^{-\frac{1}{2}}\mu\|_2^{2} \le \rho \}$.

The transformed problem allows for a closed-form solution for the point outside of the region
\begin{align*}
    \hat{x}^{*} &= \Sigma^{-\frac{1}{2}}\mu + \sqrt{\rho} \frac{\hat{y}-\Sigma^{-\frac{1}{2}}\mu}{\|\hat{y}-\Sigma^{-\frac{1}{2}}\mu \|_2} \\
    & = \Sigma^{-\frac{1}{2}}\mu + \sqrt{\rho} \frac{\Sigma^{-\frac{1}{2}} y-\Sigma^{-\frac{1}{2}}\mu}{\|\Sigma^{-\frac{1}{2}} y-\Sigma^{-\frac{1}{2}}\mu \|_2}.
\end{align*}
Therefore 
\begin{align*}
    x^{*} & = \mu + \sqrt{\rho} \frac{ y-\mu}{\|\Sigma^{-\frac{1}{2}} y-\Sigma^{-\frac{1}{2}}\mu \|} \\
    & = \mu +  \frac{ \sqrt{\rho}}{\sqrt{(y -  \mu)^{\top} {\Sigma}^{-1} (y - \mu)}} \left( y-\mu\right).
\end{align*}
   For an arbitrary $q$ and $i \in \mathcal B(q) \cup \mathcal G(q)$, 
    \begin{align*}
    & f(a_i) - \mathrm{Proj}(f(a_i))  \\ & = f(a_i) - \hat{\mu}_{q} \\ & - \frac{\sqrt{\rho_q}}{\sqrt{(f(a_i) -  \hat{\mu}_{q})^{\top} \hat{\Sigma}^{-1} (f(a_i) - \hat{\mu}_{q})}} (f(a_i) - \hat{\mu}_{q}) \\
    &= \Big[ 1 - \frac{\sqrt{\rho_q}}{\sqrt{(f(a_i) -  \hat{\mu}_{q})^{\top} \hat{\Sigma}^{-1} (f(a_i) - \hat{\mu}_{q})}} \Big] (f(a_i) - \hat{\mu}_{q}).
    \end{align*}
    By computation, if $f(a_i) \in \mathcal M_q$, then
    \[
    c_q(f(a_i), \mathrm{Proj}_{c_q, \mathcal M_q}(f(a_i))=0.
    \]
    Otherwise, the distance is
    \[
[\sqrt{(f(a_i) -  \hat{\mu}_{q})^{\top} \hat{\Sigma}^{-1} (f(a_i) - \hat{\mu}_q)} - \sqrt{\rho}]^2.
    \]
    We complete the proof.
\end{proof}

\section{Experimental results on the toxicity mitigation task}
\label{sec:exp_toxic}
\subsection{Experimetal Setup}
\noindent \textbf{Tasks:}
In this task, language models must complete an incomplete prefix of text. Typically, the prefix is chosen to provoke toxic content from LLMs. We evaluate the models based on three key metrics: toxicity, fluency, and diversity. For each prompt in the dataset, the models generate 25 responses, each limited to 20 tokens. These outputs are analyzed using the Perspective API \footnote{\url{https://perspectiveapi.com/}}, which estimates the likelihood that a human would perceive the text as toxic.

\noindent \textbf{Metrics:}
\begin{itemize}
    \item Expected Maximum Toxicity (Exp. Max. Tox.): We identify the highest toxicity score for every output and compute the average of these maximum scores across all prompts.
    \item Toxic Completion Proportion (Tox. Prob.): This metric tracks the fraction of outputs considered toxic, where toxicity is defined as a score above 0.5 based on the threshold of Perspective API. 
    \item Fluency is evaluated by calculating the perplexity values of the generated outputs, using GPT-2 (XL) as a reference model. Lower perplexity values suggest that the text is more coherent and fluent.
    \item Diversity is assessed by examining the ratio of unique n-grams (1-gram, 2-gram, and 3-gram) to the total number of tokens in the generated text. This metric captures the range of variation in the outputs, with higher values indicating more diverse and varied language use.
\end{itemize}

\noindent \textbf{Training Dataset:}
We use the Toxic Comments Classification Challenge data.\footnote{\url{https://www.kaggle.com/c/jigsaw-toxic-comment-classification-challenge}} The dataset comprises sentences and their human toxicity labels. Following existing works in the field, we adopt GPT2-Large as the base model across all experiments of the toxicity mitigation task. We include several baselines that have the same goal of reducing the toxicity of large language models (LLMs). As for MIMIC, we consider two versions: Mean Matching (MM) and Mean+Covariance Matching (MCM).
\subsection{Comparison}
The experimental results of baselines are shown in Table \ref{table:toxic}, where the base model used by all methods is GPT-2 Large. The result of the original model is described in the first row. We split baselines into two groups. The first one using an extensive finetuning procedure comprises DAPT, GeDI, PPLM, UDDIA, DExperts, and GOODTRIEVER, while the second group contains inference time finetuning-free methods like MIMIC, ITI, and FLORAIN. Baselines in the first group are better than counterparts in the second group regarding toxicity metrics. However, these methods necessitate either fine-tuning or computing gradients at inference time, which can be computationally intensive. MIMIC, ITI, and FLORAIN achieved a toxicity reduction comparable to that of many algorithms in the first group. Specifically, FLORAIN is superior to PPLM and equally competitive to DAPT. Notably, within the second group, FLORAIN offers the best toxicity reduction impact than ITI and MIMIC while maintaining a better fluency and diversity of generated sentences. The fluency of FLORAIN is even more favored than almost all algorithms in the first group except for UDDIA. At the same time, its diversity metric is better than that of other baselines apart from PPLM.

\begin{table*}[htbp]
\caption{Quantitative results of our method and baselines on toxicity mitigation task.}
\label{table:toxic}
\begin{tabular}{lccccccccc}
\toprule
Model & Exp. Max. Tox. $\downarrow$ & Tox. Prob. $\downarrow$ & Fluency $\downarrow$ & 1-gram $\uparrow$ & 2-gram $\uparrow$ & 3-gram $\uparrow$ \\ \midrule
GPT-2 (large)      & 0.39 & 0.25 & 24.66 & 0.58 & 0.85 & 0.85 \\
DAPT               & 0.27 & 0.09 & 30.27 & 0.57 & 0.84 & 0.84 \\
GeDI               & 0.24 & 0.06 & 48.12 & 0.62 & 0.84 & 0.83 \\
PPLM (10\%)        & 0.38 & 0.24 & 32.58 & 0.58 & 0.86 & 0.86 \\
UDDIA              & 0.24 & 0.04 & 26.83 & 0.51 & 0.80 & 0.83 \\
DExperts           & 0.21 & 0.02 & 27.15 & 0.56 & 0.84 & 0.84 \\
GOODTRIEVER        & 0.22 & 0.04 & 27.11 & 0.58 & 0.82 & 0.83 \\
MM (MIMIC)         & 0.33 & 0.16 & 28.00 & 0.58 & 0.85 & 0.85 \\
MCM (MIMIC)        & 0.29 & 0.09 & 30.70 & 0.54 & 0.84 & 0.84 \\
ITI                & 0.31 & 0.12 & 33.12 & 0.57 & 0.85 & 0.85 \\
\midrule
FLORAIN            & 0.27 & 0.10 & 28.11 & 0.58 & 0.85 & 0.85 \\
\bottomrule
\end{tabular}
\end{table*}
\end{document}